\newcommand{\tup}[1]{{\langle #1 \rangle}}
\newcommand{\pre}{\mathsf{pre}}     % precondition
\newcommand{\eff}{\mathsf{eff}}     % effect
\newcommand{\cond}{\mathsf{cond}}   % conditional effect
\newtheorem{mytheorem}{Theorem}
\title{Hierarchical Finite State Controllers for Generalized Planning \\(Corrected Version)}
\author{
Javier Segovia-Aguas\and Sergio Jim\'enez \and Anders Jonsson\\
Dept.~Information and Communication Technologies, Universitat Pompeu Fabra\\
Roc Boronat 138, 08018 Barcelona, Spain\\
\{javier.segovia,sergio.jimenez,anders.jonsson\}@upf.edu
}
\begin{document}
\maketitle

\begin{abstract}
Finite State Controllers (FSCs) are an effective way to represent sequential plans compactly. By imposing appropriate conditions on transitions, FSCs can also represent generalized plans that solve a range of planning problems from a given domain. In this paper we introduce the concept of {\it hierarchical FSCs} for planning by allowing controllers to call other controllers. We show that hierarchical FSCs can represent generalized plans more compactly than individual FSCs. Moreover, our call mechanism makes it possible to generate hierarchical FSCs in a modular fashion, or even to apply recursion. We also introduce a compilation that enables a classical planner to generate hierarchical FSCs that solve challenging generalized planning problems. The compilation takes as input a set of planning problems from a given domain and outputs a single classical planning problem, whose solution corresponds to a hierarchical FSC. 
\end{abstract}

\section{Introduction}
\label{sec:section1}

Finite state controllers (FSCs) are a compact and effective representation commonly used in AI; prominent examples include robotics~\cite{Brooks89} and video-games~\cite{Buckland04}. In planning, FSCs offer two main benefits: 1) solution compactness~\cite{backstrom2014}; and 2) the ability to represent {\em generalized plans} that solve a range of similar planning problems. This generalization capacity allows FSCs to represent solutions to arbitrarily large problems, as well as problems with partial observability and non-deterministic actions~\cite{Geffner:FSM:AAAI10,Levesque:GPlanning:IJCAI11,Zilberstein:Gplanning:icaps11,Giacomo:FSM:ICAPS13}. 

Even FSCs have limitations, however. Consider the problem of traversing all nodes of a binary tree as in Figure~\ref{fig:traversingbtree}. A classical plan for this task consists of an action sequence whose length is linear in the number of nodes, and hence exponential in the depth of the tree. In contrast, the recursive definition of Depth-First Search (DFS) only requires a few lines of code. However, a standard FSC cannot implement recursion, and the iterative definition of DFS is considerably more complicated, involving an external data structure.

\begin{figure}
  \begin{footnotesize}
    \begin{center}
      \begin{tikzpicture}[level/.style={sibling distance=5cm/#1},scale=.4]        
        \node [circle,draw,scale=.8] {1}
        child {node [circle,draw,scale=.8] {2}
          child {node [circle,draw,scale=.8] {3}}
          child {node [circle,draw,scale=.8] {4}}
        } 
        child {node [circle,draw,scale=.8] {5}
          child {node [circle,draw,scale=.8] {6}}
          child {node [circle,draw,scale=.8] {7}}
        };
      \end{tikzpicture}
    \end{center}
  \end{footnotesize}
  \caption{Example of a binary tree with seven nodes.}
  \label{fig:traversingbtree}
\end{figure}

In this paper we introduce a novel formalism for representing and computing compact and generalized planning solutions that we call {\it hierarchical FSCs}. Our formalism extends standard FSCs for planning in three ways. First, a hierarchical FSC can involve multiple individual FSCs. Second, each FSC can call other FSCs. Third, each FSC has a parameter list, and when an FSC is called, it is necessary to specify the arguments assigned to the parameters. As a special case, our formalism makes it possible to implement recursion by allowing an FSC to call itself with different arguments.

To illustrate this idea, Figure~\ref{fig:fsc1} shows an example hierarchical FSC $C[n]$ that implements DFS traversal of binary trees using recursion. Here, $n$ is the lone parameter of the controller and represents the current node of the binary tree. Condition $\mathsf{null}(n)$ tests whether $n$ is points to a null node, while a hyphen `-' indicates that the transition fires no matter what. Action $\mathsf{visit}(n)$ visits node $n$, while $\mathsf{copyL}(n,m)$ and $\mathsf{copyR}(n,m)$ assign the left and right child of node $n$ to $m$, respectively. Action $\mathsf{call}(m)$ is a recursive call to the FSC itself, assigning argument $m$ to the only parameter of the controller and restarting execution from its initial node $Q_0$. 

\begin{figure}[hbt]
  \begin{center}
	\includegraphics[width=0.51\textwidth]{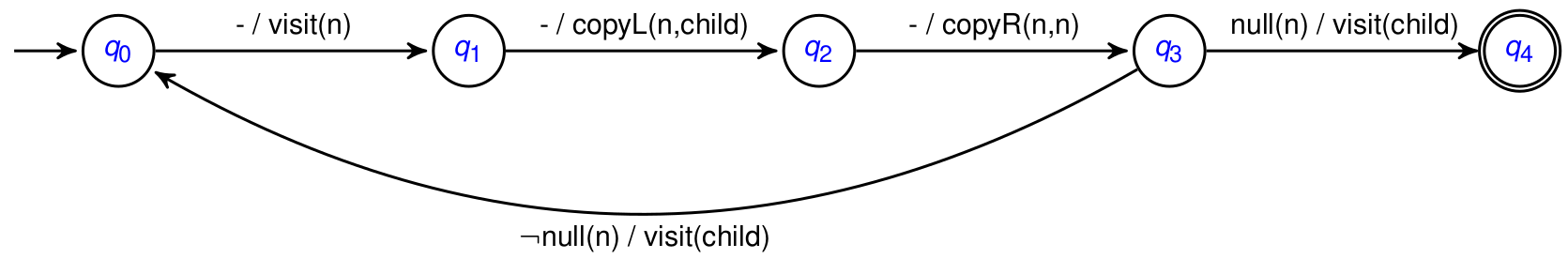}
  \end{center}
  \caption{Hierarchical FSC $C[n]$ that traverses a binary tree.}
  \label{fig:fsc1}
\end{figure}

Intuitively, by repeatedly assigning the right child of $n$ to $n$ itself (using the action $\mathsf{copyR}(n,n)$) and following the cycle of controller states $Q_0,Q_1,Q_2,Q_3,Q_0,\ldots$, the FSC $C[n]$ has the effect of visiting all nodes on the rightmost branch of the tree until a null node is reached. Moreover, by assigning the left child of $n$ to $child$ (using the action $\mathsf{copyL}(n,child)$) and making the recursive call $\mathsf{call}(child)$, the FSC $C[n]$ is recursively executed on all left sub-trees. The controller state $Q_4$ is a terminal state, and the action $\mathsf{visit}(child)$ on the transition to $Q_4$ is in fact not needed and could be removed. However, the FSC is automatically generated by our approach, so we present conditions and actions exactly as they appear.

Compared to previous work on the automatic generation of FSCs for planning the contributions of this paper are:
\begin{enumerate}
\item A reformulation of the transition function of FSCs that allows binary branching only in order to reduce the space of possible controllers.
\item A formal definition of hierarchical FSCs for planning that allows controllers to call other controllers and that includes recursion as a special case.
\item A novel compilation that enables the automatic generation of hierarchical FSCs for challenging generalized planning tasks. The compilation takes as input a set of planning problems from a given domain and outputs a single classical planning problem whose solution corresponds to a hierarchical FSC. This output is expressed in PDDL, thus an off-the-shelf classical planner can be used to generate hierarchical FSCs. The compilation also makes it possible to incorporate prior knowledge in the form of existing FSCs to automatically complete the definition of the remaining FSCs.
\end{enumerate}

%The rest of the paper is organized as follows. In Section~\ref{sec:section2} we define classical planning and present our definition of FSCs for planning, which is slightly adapted from previous work. In Section~\ref{sec:section3} we describe a compilation that takes as input a set of planning problems and outputs a single planning problem whose solution corresponds to an FSC that solves all problems in the input set. In Section~\ref{sec:section4} we formally define hierarchical FSCs and extend our compilation such that the output is a single planning problem whose solution corresponds to a hierarchical FSC. In Section~\ref{sec:section6} we present results from experiments with our compilation. Finally, in Section~\ref{sec:section7} we describe related work and in Section~\ref{sec:section8} we conclude with a discussion.

\section{Background}
\label{sec:section2}
This section defines our model for classical planning and presents the formalism we use to define FSCs for planning.

\subsection{Classical Planning with Conditional Effects}

We describe states and actions in terms of literals. Formally, given a set of fluents $F$, a literal $l$ is a valuation of a fluent in $F$, i.e.~$l=f$ or $l=\neg f$ for some $f\in F$. A set of literals $L$ thus represents a partial assignment of values to fluents (WLOG we assume that $L$ does not assign conflicting values to any fluent). Given $L$, let $\neg L=\{\neg l:l\in L\}$ be the complement of $L$. A {\em state} $s$ is a set of literals such that $|s|=|F|$, i.e.~a total assignment of values to fluents.

A classical planning problem is a tuple $P=\tup{F,A,I,G}$, where $F$ is a set of fluents, $A$ is a set of actions, $I$ is an initial state and $G$ is a goal condition, i.e.~a set of literals. Each action $a\in A$ has a set of literals $\pre(a)$ called the {\em precondition} and a set of conditional effects $\cond(a)$. Each conditional effect $C\rhd E\in\cond(a)$ is composed of sets of literals $C$ (the condition) and $E$ (the effect). We often describe the initial state $I\subseteq F$ compactly as the subset of fluents that are true.

Action $a$ is applicable in state $s$ if and only if $\pre(a)\subseteq s$, and the resulting set of {\em triggered effects} is
\[
\eff(s,a)=\bigcup_{C\rhd E\in\cond(a),C\subseteq s} E,
\]
i.e.~effects whose conditions hold in $s$. The result of applying $a$ in $s$ is a new state $\theta(s,a)=(s\setminus \neg\eff(s,a))\cup\eff(s,a)$.

A plan for $P$ is an action sequence $\pi=\tup{a_1, \ldots, a_n}$ that induces a state sequence $\tup{s_0, s_1, \ldots, s_n}$ such that $s_0=I$ and, for each $i$ such that $1\leq i\leq n$, $a_i$ is applicable in $s_{i-1}$ and generates the successor state $s_i=\theta(s_{i-1},a_i)$. The plan $\pi$ {\em solves} $P$ if and only if $G\subseteq s_n$, i.e.~if the goal condition is satisfied following the application of $\pi$ in $I$.

\subsection{Finite State Controllers}

Given a planning problem $P=\tup{F,A,I,G}$, an FSC is defined as a tuple $C=\tup{Q,T,q_0,q_\bot}$, where $Q$ is a set of controller states, $T:Q\times 2^F\rightarrow Q\times A$ is a (partial) transition function that assumes full observability, and $q_0\in Q$ and $q_\bot\in Q$ are the initial and terminal controller states, respectively. This definition relates to previous work on FSCs for generalized planning~\cite{Geffner:FSM:AAAI10,Giacomo:FSM:ICAPS13} as follows:
\begin{itemize}
\item Just like in previous approaches (and unlike Mealy machines), transitions do not depend on explicit input sequences but on the current planning state.
\item Previous approaches assume partial observability of the current planning state, defining the transition function $T$ on $Q\times O$, where $O$ is the observation set. We instead define $T$ on $Q\times 2^F$, i.e.~on the full set of fluents.
\item We define an explicit terminal state $q_\bot$, while previous approaches terminate upon reaching the goal condition $G$. The reason is that we will later extend our definition to hierarchies of FSCs where goals $G$ are not necessarily satisfied when the execution of an FSC terminates.
\end{itemize}

We briefly describe the execution semantics of an FSC $C$ on planning problem $P$. The current world state is a pair $(q,s)\in Q\times 2^F$ of a controller state and a planning state. From a pair $(q,s)$, the system transitions to $(q',s')$, where $(q',a)=T(q,s)$ is the result of applying the transition function in $(q,s)$ and $s'=\theta(s,a)$ is the result of applying action $a$ in $s$. Execution starts at $(q_0,I)$ and repeatedly transitions until reaching a pair $(q_\bot,s_\bot)$ that contains the terminal controller state $q_\bot$. An FSC $C$ solves $P$ iff $G\subseteq s_\bot$ upon termination, i.e.~if the goal condition holds in $s_\bot$. The execution of $C$ fails if it reaches a pair $(q,s)$ that was already visited.

A generalized planning problem $\mathcal{P}=\{P_1,\ldots,P_T\}$ is a set of multiple individual planning problems that share fluents and actions. Each individual planning problem $P_t\in\mathcal{P}$ is thus defined as $P_t=\tup{F,A,I_t,G_t}$, where only the initial state $I_t$ and goal condition $G_t$ differ from other planning problems in $\mathcal{P}$. An FSC $C$ solves a generalized planning problem $\mathcal{P}$ if and only if it solves every problem $P_t\in \mathcal{P}$.

\section{Generating Finite State Controllers}
\label{sec:section3}

This section presents a compilation that takes as input a classical planning problem $P=\langle F,A,I,G\rangle$ and a bound $n$ on the maximum number of controller states, and produces as output a classical planning problem $P_n$. Actions in $P_n$ are defined such that any plan that solves $P_n$ has to both generate an FSC $C$ and simulate the execution of $C$ on $P$, thus verifying that $C$ solves $P$. We later extend this compilation to generalized planning problems and hierarchies of FSCs.

To generate an FSC $C=\tup{Q,T,q_0,q_\bot}$ using this compilation we first define $Q=\{q_0,\ldots,q_n\}$ and set $q_\bot\equiv q_n$. The only thing that remains is to construct the transition function $T$. Our approach is to reduce the space of possible controllers by compactly representing $T:Q\times 2^F\rightarrow Q\times A$ using the following three functions $\Gamma$, $\Lambda$ and $\Phi$:
\begin{itemize}
\item $\Gamma:Q\rightarrow F$ associates a fluent $f=\Gamma(q)$ to each $q\in Q$.
\item $\Lambda:Q\times\{0,1\}\rightarrow Q$ returns a successor state in $Q$.
\item $\Phi:Q\times\{0,1\}\rightarrow A$ returns an action in $A$.
\end{itemize}
The transition from a world state $(q,s)$ depends on the truth value of $\Gamma(q)$ in $s$, hence allowing binary branching only. Let $\Gamma(q)\in s$ be a test whose outcome is interpreted as a Boolean value in $\{0,1\}$. The transition function is then defined as $T(q,s)=(\Lambda(q,\Gamma(q)\in s),\Phi(q,\Gamma(q)\in s))$.

We proceed to define $P_n=\{F_n,A_n,I_n,G_n\}$. The idea behind the compilation is to define two types of actions: {\it program actions} that program the three functions $\Gamma$, $\Lambda$ and $\Phi$ for each controller state of $C$, and {\it execute actions} that simulate the execution of $C$ on $P$ by evaluating the functions in the current planning state. 

The set of fluents is $F_n=F\cup F_{T}\cup F_{aux}$, where $F_{T}$ contains the fluents needed to encode the transition function:
\begin{itemize}
\item For each $q\in Q$ and $f\in F$, a fluent $\mathsf{cond}_q^f$ that holds iff $f$ is the condition of $q$, i.e.~if $\Gamma(q)=f$.
\item For each $q,q'\in Q$ and $b\in\{0,1\}$, a fluent $\mathsf{succ}_{q,q'}^b$ that holds iff $\Lambda(q,b)=q'$.
\item For each $q\in Q$, $b\in\{0,1\}$ and $a\in A$, a fluent $\mathsf{act}_{q,a}^b$ that holds iff $\Phi(q,b)=a$.
\item For each $q\in Q$ and $b\in\{0,1\}$, fluents $\mathsf{nocond}_q$, $\mathsf{nosucc}_q^b$ and $\mathsf{noact}_q^b$ that hold iff we have yet to program the functions $\Gamma$, $\Lambda$ and $\Phi$, respectively.
\end{itemize}

Moreover, $F_{aux}$ contains the following fluents:
\begin{itemize}
\item For each $q\in Q$, a fluent $\mathsf{cs}_q$ that holds iff $q$ is the current controller state.
\item Fluents $\mathsf{evl}$ and $\mathsf{app}$ that hold iff we are done evaluating the condition or applying the action corresponding to the current controller state, and fluents $\mathsf{o}^0$ and $\mathsf{o}^1$ representing the outcome of the evaluation.
\end{itemize}

The initial state and goal condition are defined as $I_n=I\cup\{\mathsf{cs}_{q_0}\}\cup\{\mathsf{nocond}_q,\mathsf{noact}_q^b,\mathsf{nosucc}_q^b:q\in Q,b\in\{0,1\}\}$ and $G_n=G\cup\{\mathsf{cs}_{q_n}\}$. Finally, the set of actions $A_n$ replaces the actions in $A$ with the following actions:
\begin{itemize}
\item For each $q\in Q$ and $f\in F$, an action $\mathsf{pcond}_q^f$ for programming $\Gamma(q)=f$:
\begin{align*}
\pre(\mathsf{pcond}_q^f)&=\{\mathsf{cs}_q,\mathsf{nocond}_q\},\\
\eff(\mathsf{pcond}_q^f)&=\{\emptyset\rhd\{\neg\mathsf{nocond}_q,\mathsf{cond}_q^f\}\}.
\end{align*}
\item For each $q\in Q$ and $f\in F$, an action $\mathsf{econd}_q^f$ that evaluates the condition of the current controller state:
\begin{align*}
\pre(\mathsf{econd}_q^f)&=\{\mathsf{cs}_q,\mathsf{cond}_q^f,\neg\mathsf{evl}\},\\
\eff(\mathsf{econd}_q^f)&=\{\emptyset\rhd\{\mathsf{evl}\},\{\neg f\}\rhd\{\mathsf{o^0}\},\{f\}\rhd\{\mathsf{o^1}\}\}.
\end{align*}
\item For each $q\in Q$, $b\in\{0,1\}$ and $a\in A$, an action $\mathsf{pact}_{q,a}^b$ for programming $\Phi(q,b)=a$:
\begin{align*}
\pre(\mathsf{pact}_{q,a}^b)&=\pre(a)\cup\{\mathsf{cs}_q,\mathsf{evl},\mathsf{o}^b,\mathsf{noact}_q^b\},\\
\eff(\mathsf{pact}_{q,a}^b)&=\{\emptyset\rhd\{\neg\mathsf{noact}_q^b,\mathsf{act}_{q,a}^b\}\}.
\end{align*}
\item For each $q\in Q$, $b\in\{0,1\}$ and $a\in A$, an action $\mathsf{eact}_{q,a}^b$ that applies the action of the current controller state:
\begin{align*}
\pre(\mathsf{eact}_{q,a}^b)&=\pre(a)\cup\{\mathsf{cs}_q,\mathsf{evl},\mathsf{o}^b,\mathsf{act}_{q,a}^b,\neg\mathsf{app}\},\\
\eff(\mathsf{eact}_{q,a}^b)&=\eff(a)\cup\{\emptyset\rhd\{\mathsf{app}\}\}.
\end{align*}
\item For each $q,q'\in Q$ and $b\in\{0,1\}$, an action $\mathsf{psucc}_{q,q'}^b$ for programming  $\Lambda(q,b)=q'$:
\begin{align*}
\pre(\mathsf{psucc}_{q,q'}^b)&=\{\mathsf{cs}_q,\mathsf{evl},\mathsf{o}^b,\mathsf{app},\mathsf{nosucc}_q^b\},\\
\eff(\mathsf{psucc}_{q,q'}^b)&=\{\emptyset\rhd\{\neg\mathsf{nosucc}_q^b,\mathsf{succ}_{q,q'}^b\}\}.
\end{align*}
\item For each $q,q'\in Q$ and $b\in\{0,1\}$, an action $\mathsf{esucc}_{q,q'}^b$ that transitions to the next controller state:
\begin{align*}
\pre(\mathsf{esucc}_{q,q'}^b)&=\{\mathsf{cs}_q,\mathsf{evl},\mathsf{o}^b,\mathsf{app},\mathsf{succ}_{q,q'}^b\},\\
\eff(\mathsf{esucc}_{q,q'}^b)&=\{\emptyset\rhd\{\neg\mathsf{cs}_q,\neg\mathsf{evl},\neg\mathsf{o}^b,\neg\mathsf{app},\mathsf{cs}_{q'}\}\}.
\end{align*}
\end{itemize}
Actions $\mathsf{pcond}_q^f$, $\mathsf{pact}_{q,a}^b$ and $\mathsf{psucc}_{q,q'}^b$ program the three functions $\Gamma$, $\Phi$ and $\Lambda$, respectively, while $\mathsf{econd}_q^f$, $\mathsf{eact}_{q,a}^b$ and $\mathsf{esucc}_{q,q'}^b$ execute the corresponding function. Fluents $\mathsf{evl}$ and $\mathsf{app}$ control the order of the execution such that $\Gamma$ is always executed first, then $\Phi$, and finally $\Lambda$.

\begin{mytheorem}\label{thm:fsc}
Any plan $\pi$ that solves $P_n$ induces an FSC $C$ that solves $P$.
\end{mytheorem}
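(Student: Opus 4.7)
The plan is to extract $C$ directly from $\pi$ by reading off which program actions fire. For each $q\in Q$, $b\in\{0,1\}$, $f\in F$ and $a\in A$, I would set $\Gamma(q)=f$ if $\mathsf{pcond}_q^f$ occurs in $\pi$, $\Phi(q,b)=a$ if $\mathsf{pact}_{q,a}^b$ occurs, and $\Lambda(q,b)=q'$ if $\mathsf{psucc}_{q,q'}^b$ occurs; these three functions then induce the transition function $T$ via the compact representation $T(q,s)=(\Lambda(q,\Gamma(q)\in s),\Phi(q,\Gamma(q)\in s))$. The functions are well-defined because the preconditions of the programming actions include $\mathsf{nocond}_q$, $\mathsf{noact}_q^b$ and $\mathsf{nosucc}_q^b$, each of which holds in $I_n$ and is deleted by the first programming action targeting the corresponding pair; hence at most one value per pair is ever programmed.

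Next, I would prove by induction on the number of $\mathsf{esucc}$ actions executed in $\pi$ that the sequence of pairs $(q, s|_F)$ read off along $\pi$---taking the unique $q$ with $\mathsf{cs}_q$ true and restricting the world state to the fluents of $F$---coincides with the world-state trajectory of $C$ started from $(q_0,I)$. The auxiliary flags $\mathsf{evl}$ and $\mathsf{app}$ force the execute actions within a single FSC step to occur in the strict sequence $\mathsf{econd}\to\mathsf{eact}\to\mathsf{esucc}$: the first sets $\mathsf{o}^b$ with $b$ matching the truth value of $\Gamma(q)\in s$; the second, by its $\mathsf{act}_{q,a}^b$ precondition, must apply the unique action $a=\Phi(q,b)$ whose effect $\eff(a)$ updates the planning state exactly as $\theta(s,a)$; the third, by its $\mathsf{succ}_{q,q'}^b$ precondition, must move the current-state marker to the unique $q'=\Lambda(q,b)$ and resets the auxiliary flags. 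Program actions may be interleaved arbitrarily, but they never alter the planning-state fluents or the controller marker, and once a $\mathsf{cond}$, $\mathsf{act}$ or $\mathsf{succ}$ fluent is set it is never deleted, so the inductive correspondence is preserved at every step.

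Finally, since $G_n=G\cup\{\mathsf{cs}_{q_n}\}$ holds at the end of $\pi$, the induced execution of $C$ terminates in a pair $(q_\bot,s_\bot)$ with $G\subseteq s_\bot$, so $C$ solves $P$. The main obstacle I anticipate is the inductive step above: one has to argue carefully that arbitrary interleavings of program and execute actions never produce a value inconsistent with the extracted $\Gamma$, $\Phi$, $\Lambda$. The critical observations are that no execute action can fire before the matching program action has fired (its precondition demands the corresponding programmed fluent) and that programmed values persist forever. A minor remaining technicality is that $\Gamma$, $\Phi$, $\Lambda$ may be undefined on pairs $(q,b)$ never reached during the simulated execution; any completion to total functions yields the same execution trace on $P$ and hence an FSC that solves $P$.
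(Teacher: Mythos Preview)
Your proposal is correct and follows essentially the same approach as the paper's proof sketch: extract $\Gamma$, $\Phi$, $\Lambda$ from the programming actions (which cannot be reprogrammed since nothing re-adds $\mathsf{nocond}_q$, $\mathsf{noact}_q^b$, $\mathsf{nosucc}_q^b$), then argue that the $\mathsf{econd}\to\mathsf{eact}\to\mathsf{esucc}$ discipline enforced by $\mathsf{evl}$ and $\mathsf{app}$ makes $\pi$ simulate the run of $C$ on $P$, so that $G_n=G\cup\{\mathsf{cs}_{q_n}\}$ at the end yields $G\subseteq s_\bot$. Your treatment is in fact more careful than the paper's sketch---in particular your explicit handling of the partiality of $\Gamma,\Phi,\Lambda$ on unreached pairs and of the benign interleaving of programming actions---but the underlying argument is the same.
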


\begin{proof}[Proof sketch]
The only way to change the current controller state is to apply an action of type $\mathsf{esucc}_{q,q'}^b$, which first requires programming and executing the functions $\Gamma$, $\Phi$ and $\Lambda$ in that order. Once programmed, the plan $\pi$ can no longer change these functions since there are no actions that add fluents among $\mathsf{nocond}_q$, $\mathsf{noact}_q^b$ and $\mathsf{nosucc}_q^b$. Once programmed for all states and Boolean values $b\in\{0,1\}$, the three functions $\Gamma$, $\Phi$ and $\Lambda$ together define an FSC $C$.

We show that $\pi$ simulates an execution of $C$ on $P$. The initial state $I\cup\{\mathsf{cs}_{q_o}\}$ corresponds to the world state $(q_0,I)$. In any world state $(q,s)$, the plan has to apply the partial action sequence $\langle\mathsf{econd}_q^f,\mathsf{eact}_{q,a}^b,\mathsf{esucc}_{q,q'}^b\rangle$. Action $\mathsf{econd}_q^f$ adds $\mathsf{o}^b$ where $b\in\{0,1\}$ is the truth value of $f$ in $s$. Action $\mathsf{eact}_{q,a}^b$ applies the action $a$ in $s$ to obtain a new state $s'=\theta(s,a)$. Finally, action $\mathsf{esucc}_{q,q'}^b$ transitions to controller state $q'$. This deterministic execution continues until we reach a terminal state $(q_n,s_n)$ or revisit a world state. If $\pi$ solves $P_n$, execution finishes in $(q_n,s_n)$ and the goal condition $G$ holds in $s_n$, which is the definition of $C$ solving $P$.
\end{proof}

We extend the compilation to address generalized planning problems $\mathcal{P}=\{P_1,\ldots,P_T\}$. In this case a solution to $P_n$ builds an FSC $C$ and simulates the execution of $C$ on all the individual planning problems $P_t\in\mathcal{P}$. The extension introduces actions $\mathsf{end}_t$, $1\leq t<T$, with precondition $G_t\cup\{\mathsf{cs}_{q_n}\}$ and conditional effects that reset the world state to $(q_0,I_{t+1})$ after solving $P_t$. In addition, the initial state and goal condition are redefined as $I_n=I_1\cup\{\mathsf{cs}_{q_0}\}\cup\{\mathsf{nocond}_q,\mathsf{noact}_q^b,\mathsf{nosucc}_q^b:q\in Q,b\in\{0,1\}\}$ and $G_n=G_T\cup\{\mathsf{cs}_{q_n}\}$.

\section{Hierarchical Finite State Controllers}
\label{sec:section4}

This section extends our formalism for FSCs to hierarchical FSCs. We do so by allowing FSCs to call other FSCs. An FSC $C$ can now have parameters, and calls to $C$ specify the arguments passed to the parameters of $C$. Again, we first describe hierarchical FSCs for solving a single planning problem $P=\langle F,A,I,G\rangle$, and then extend the idea to generalized planning.

As in PDDL, we assume that fluents in $F$ are instantiated from predicates. Moreover, we assume that there exist a set of {\em variable objects} $\Omega_v$ and a set of {\em value objects} $\Omega_x$, and that for each $v\in\Omega_v$ and $x\in\Omega_x$, $F$ contains a fluent $\mathsf{assign}_{v,x}$ that models an assignment of type $v=x$. Let $F_a\subseteq F$ be the set of such assignment fluents and let $F_r=F\setminus F_a$ be the remaining fluents.

Given a planning problem $P$ with fluents $F_a\subseteq F$ induced from sets $\Omega_v$ and $\Omega_x$, a hierarchical FSC is a tuple $\mathcal{H}=\langle\mathcal{C},C_1\rangle$, where $\mathcal{C}=\{C_1,\ldots,C_m\}$ is the set of FSCs in the hierarchy and $C_1\in\mathcal{C}$ is the root FSC. We assume that all FSCs in $\mathcal{C}$ share the same set of controller states $Q$ and that each $C_i\in\mathcal{C}$ has an associated parameter list $L_i\in\Omega_v^{k_i}$ consisting of $k_i$ variable objects in $\Omega_v$.  The set of possible {\em FSC calls} is then given by $\mathcal{Z}=\{C_i[p]:C_i\in\mathcal{C},p\in\Omega_v^{k_i}\}$, i.e.~all ways to select an FSC $C_i$ from $\mathcal{C}$ and assign arguments to its parameters. The transition function $T_i$ of each FSC $C_i$ is redefined as $T_i:Q\times 2^F\rightarrow Q\times(A\cup\mathcal{Z})$ to include possible calls to the FSCs in $\mathcal{C}$. As before, we represent $T_i$ compactly using functions $\Gamma_i$, $\Lambda_i$ and $\Phi_i$.

To define the execution semantics of a hierarchical FSC $\mathcal{H}$ we introduce a {\em call stack}. Execution starts in the root FSC, at state $(q_0,I)$ and on level $0$ of the stack. In general, for an FSC $C_i$ and a world state $(q,s)$ and given that $T_i(q,s)=(q',a)$ returns an action $a\in A$, the execution semantics is as explained in Section~\ref{sec:section2} for single FSCs. However, when $T_i(q,s)=(q',C_j[p])$ returns a call to controller $C_j[p]\in\mathcal{Z}$, we set the state on the next level of the stack to $(q_0,s[p])$, where $s[p]$ is obtained from $s$ by copying the value of each variable object in $p$ to the corresponding parameter of $C_j$. Execution then proceeds on the next level of the stack following transition function $T_j$, which can include other FSC calls that invoke higher stack levels. If $T_j$ reaches a terminal state $(q_\bot,s_\bot)$, control is returned to the parent controller $C_i$. Specifically, the state of $C_i$ becomes $(q',s')$, where $s'$ is obtained from $s_\bot$ by substituting the original assignments of values to variables on the previous stack level. The execution of a hierarchical FSC $\mathcal{H}$ terminates when it reaches a terminal state $(q_\bot,s_\bot)$ on stack level $0$, and $\mathcal{H}$ solves $P$ iff $G\subseteq s_\bot$.

\subsection{An Extended Compilation for Hierarchical Finite State Controllers}

We now describe a compilation from $P$ to a classical planning problem $P_{n,m}^\ell=\langle F_{n,m}^\ell,A_{n,m}^\ell,I_{n,m}^\ell,G_{n,m}^\ell\rangle$, such that solving $P_{n,m}^\ell$ amounts to programming a hierarchical FSC $\mathcal{H}=\langle\mathcal{C},C_1\rangle$ and simulating its execution on $P$. As before, $n$ bounds the number of controller states, while $m$ is the maximum number of FSCs in $\mathcal{C}$ and $\ell$ bounds the size of the call stack. The set of fluents is $F_{n,m}^\ell=F_r\cup F_a^\ell\cup F_{T}^m\cup F_{aux}^\ell\cup F_H$ where
\begin{itemize}
\item $F_a^\ell=\{f^l:f\in F_a,0\leq l\leq\ell\}$, i.e.~each fluent of type $\mathsf{assign}_{v,x}$ has a copy for each stack level $l$.
\item $F_{T}^m=\{f^i:f\in F_{T},1\leq i\leq m\}$, i.e.~each fluent in $F_{T}$ has a copy for each FSC $C_i\in\mathcal{C}$ defining its corresponding transition function $T_i$.
\item $F_{aux}^\ell=\{f^l:f\in F_{aux},0\leq l\leq\ell\}$, i.e.~each fluent in $F_{aux}$ has a copy for each stack level $l$.
\end{itemize}
Moreover, $F_H$ contains the following additional fluents:
\begin{itemize}
\item For each $l$, $0\leq l\leq\ell$, a fluent $\mathsf{lvl}^l$ that holds iff $l$ is the current stack level.
\item For each $C_i\in\mathcal{C}$ and $l$, $0\leq l\leq\ell$, a fluent $\mathsf{fsc}^{i,l}$ that holds iff $C_i$ is the FSC being executed on stack level $l$.
\item For each $q\in Q$, $b\in\{0,1\}$, $C_i,C_j\in\mathcal{C}$ and $p\in\Omega_v^{k_j}$, a fluent $\mathsf{call}_{q,j}^{b,i}(p)$ that holds iff $\Phi_i(q,b)=C_j[p]$.
\end{itemize}
The initial state and goal condition are now defined as $I_{n,m}^\ell=(I\cap F_r)\cup\{f^0:f\in I\cap F_a\}\cup\{\mathsf{cs}_{q_0}^0,\mathsf{lvl}^0,\mathsf{fsc}^{1,0}\}\cup\{\mathsf{nocond}_q^i,\mathsf{noact}_q^{b,i},\mathsf{nosucc}_q^{b,i}:q\in Q,b\in\{0,1\},C_i\in\mathcal{C}\}$ and $G_{n,m}^\ell=G\cup\{\mathsf{cs}_{q_n}^0\}$. In other words, fluents of type $\mathsf{assign}_{v,x}\in F_a$ are initially marked with stack level $0$, the controller state on level $0$ is $q_0$, the current stack level is $0$, the FSC on level $0$ is $C_1$, and functions $\Gamma_i$, $\Lambda_i$ and $\Phi_i$ are yet to be programmed for any FSC $C_i\in\mathcal{C}$. To satisfy the goal we have to reach the terminal state $q_n$ on level $0$ of the stack.

To establish the actions in the set $A_{n,m}^\ell$, we first adapt all actions in $A_n$ by parameterizing on the FSC $C_i\in\mathcal{C}$ and stack level $l$, $0\leq l\leq\ell$, adding preconditions $\mathsf{lvl}^l$ and $\mathsf{fsc}^{i,l}$, and modifying the remaining preconditions and effects accordingly. As an illustration we provide the definition of the resulting action $\mathsf{pcond}_q^{f,i,l}$:
\begin{align*}
\pre(\mathsf{pcond}_q^{f,i,l})&=\{\mathsf{lvl}^l,\mathsf{fsc}^{i,l},\mathsf{cs}_q^l,\mathsf{nocond}_q^i\},\\
\eff(\mathsf{pcond}_q^{f,i,l})&=\{\emptyset\rhd\{\neg\mathsf{nocond}_q^i,\mathsf{cond}_q^{f,i}\}\}.
\end{align*}
Compared to the old version of $\mathsf{pcond}_q^f$, the current controller state $\mathsf{cs}_q^l\in F_{aux}^\ell$ refers to the stack level $l$, and fluents $\mathsf{nocond}_q^i$ and $\mathsf{cond}_q^{f,i}$ in $F_{T}^m$ refer to the FSC $C_i$. The precondition models the fact that we can only program the function $\Gamma_i$ of $C_i$ in controller state $q$ on stack level $l$ when $l$ is the current stack level, $C_i$ is being executed on level $l$, the current controller state on level $l$ is $q$, and $\Gamma_i$ has not been previously programmed in $q$.

In addition to the actions adapted from $A_n$, the set $A_{n,m}^\ell$ also contains the following new actions:
\begin{itemize}
\item For each $q\in Q$, $b\in\{0,1\}$, $C_i,C_j\in\mathcal{C}$, $p\in\Omega_v^{k_j}$ and $l$, $0\leq l<\ell$, an action $\mathsf{pcall}_{q,j}^{b,i,l}(p)$ to program a call from the current FSC, $C_i$, to FSC $C_j$:
\begin{align*}
\pre(\mathsf{pcall}_{q,j}^{b,i,l}(p))&=\{\mathsf{lvl}^l,\mathsf{fsc}^{i,l},\mathsf{cs}_q^l,\mathsf{evl}^l,\mathsf{o}^{b,l},\mathsf{noact}_q^{b,i}\},\\
\eff(\mathsf{pcall}_{q,j}^{b,i,l}(p))&=\{\emptyset\rhd\{\neg\mathsf{noact}_q^{b,i},\mathsf{call}_{q,j}^{b,i}(p)\}\}.
\end{align*}
\item For each $q\in Q$, $b\in\{0,1\}$, $C_i,C_j\in\mathcal{C}$, $p\in\Omega_v^{k_j}$ and $l$, $0\leq l<\ell$, an action $\mathsf{ecall}_{q,j}^{b,i,l}(p)$ that executes a call:
\begin{align*}
\pre&(\mathsf{ecall}_{q,j}^{b,i,l}(p))=\\&\{\mathsf{lvl}^l,\mathsf{fsc}^{i,l},\mathsf{cs}_q^l,\mathsf{evl}^l,\mathsf{o}^{b,l},\mathsf{call}_{q,j}^{b,i}(p),\neg\mathsf{app}^l\},\\
\eff&(\mathsf{ecall}_{q,j}^{b,i,l}(p))=\{\emptyset\rhd\{\neg\mathsf{lvl}^l,\mathsf{lvl}^{l+1},\mathsf{cs}_{q_0}^{l+1},\mathsf{app}^l\}\}\\
\cup\;&\{\{\mathsf{assign}_{p^k,x}^l\}\rhd\{\mathsf{assign}_{L_j^k,x}^{l+1}\}:1\leq k\leq k_j,x\in\Omega_x\}.
\end{align*}
\item For each $C_i\in\mathcal{C}$ and $l$, $0<l\leq\ell$, an action $\mathsf{term}^{i,l}$:
\begin{align*}
\pre(\mathsf{term}^{i,l})&=\{\mathsf{lvl}^l,\mathsf{fsc}^{i,l},\mathsf{cs}_{q_n}^l\},\\
\eff(\mathsf{term}^{i,l})&=\{\emptyset\rhd\{\neg\mathsf{lvl}^l,\neg\mathsf{fsc}^{i,l},\neg\mathsf{cs}_{q_n}^l,\mathsf{lvl}^{l-1}\}\}\\
&\cup\;\{\emptyset\rhd\{\neg\mathsf{assign}_{v,x}^l:v\in\Omega_v,x\in\Omega_x\}\}.
\end{align*}
\end{itemize}
As an alternative to $\mathsf{pact}_{q,a}^{b,i,l}$, the action $\mathsf{pcall}_{q,j}^{b,i,l}(p)$ programs an FSC call $C_j[p]$, i.e.~defines the function as $\Phi_i(q,b)=C_j[p]$. Action $\mathsf{ecall}_{q,j}^{b,i,l}(p)$ executes this FSC call by incrementing the current stack level to $l+1$ and setting the controller state on level $l+1$ to $q_0$. The conditional effect $\{\mathsf{assign}_{p^k,x}^l\}\rhd\{\mathsf{assign}_{L_j^k,x}^{l+1}\}$ effectively copies the value of the argument $p^k$ on level $l$ to the corresponding parameter $L_j^k$ of $C_j$ on level $l+1$. When in the terminal state $q_n$, the termination action $\mathsf{term}^{i,l}$ decrements the stack level to $l-1$ and deletes all temporary information about stack level $l$.

\begin{mytheorem}\label{thm:hfsc}
Any plan $\pi$ that solves $P_{n,m}^\ell$ induces a hierarchical FSC $\mathcal{H}$ that solves $P$.
\end{mytheorem}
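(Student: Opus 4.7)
The plan is to lift the argument of Theorem~\ref{thm:fsc} to the hierarchical setting by establishing two things in parallel: (i) any plan $\pi$ for $P_{n,m}^\ell$ fully programs the triple $(\Gamma_i,\Lambda_i,\Phi_i)$ for each $C_i\in\mathcal{C}$, thereby inducing a well-defined hierarchical FSC $\mathcal{H}=\langle\mathcal{C},C_1\rangle$; and (ii) the remainder of $\pi$ simulates $\mathcal{H}$ on $P$ faithfully under the call-stack semantics of Section~\ref{sec:section4}.

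First I would isolate the programming phase exactly as in the single-FSC case. The only actions that can add the flags $\mathsf{cond}_q^{f,i}$, $\mathsf{act}_{q,a}^{b,i}$, $\mathsf{call}_{q,j}^{b,i}(p)$ and $\mathsf{succ}_{q,q'}^{b,i}$ are the parameterized versions of $\mathsf{pcond}$, $\mathsf{pact}$, $\mathsf{pcall}$ and $\mathsf{psucc}$, each guarded by the corresponding $\mathsf{nocond}_q^i$, $\mathsf{noact}_q^{b,i}$ or $\mathsf{nosucc}_q^{b,i}$ flag that is deleted by that same action and never re-added. Hence each entry of each $T_i$ is programmed at most once across the whole plan, so the collection of programmed flags at the end of $\pi$ uniquely determines a (possibly partial) hierarchical FSC $\mathcal{H}$; entries that are never programmed are irrelevant because the simulation never reaches those branches.

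Next I would prove, by induction on the number of steps already executed, the invariant: whenever $\mathsf{lvl}^l$, $\mathsf{fsc}^{i,l}$ and $\mathsf{cs}_q^l$ all hold, the current world state of $P_{n,m}^\ell$ encodes a hierarchical-FSC configuration whose top-of-stack frame is $(C_i,q,s^l)$, where $s^l$ consists of the $F_r$ fluents together with the assign-fluents tagged by level $l$. The triple $\langle\mathsf{econd}_q^{f,i,l},\mathsf{eact}_{q,a}^{b,i,l},\mathsf{esucc}_{q,q'}^{b,i,l}\rangle$ reproduces the single-FSC argument and corresponds to one primitive transition of $\mathcal{H}$. A call step instead fires $\langle\mathsf{econd},\mathsf{ecall}_{q,j}^{b,i,l}(p),\mathsf{esucc}\rangle$; the conditional effects of $\mathsf{ecall}$ copy each argument $p^k$ from level $l$ to the parameter $L_j^k$ on level $l+1$, set $\mathsf{cs}_{q_0}^{l+1}$ and $\mathsf{lvl}^{l+1}$, and leave the assign-fluents of level $l$ untouched. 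A return step fires $\mathsf{term}^{i,l}$, which deletes every assign-fluent on level $l$, clears the auxiliary fluents for that level and reinstates $\mathsf{lvl}^{l-1}$; since the caller's assignments live on level $l-1$ and were never modified by the callee, the caller resumes in the state required by the semantics.

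The main obstacle, and the place where the argument is most delicate, will be showing that the per-level replication of assign-fluents is semantically faithful to the substitution-based definition of Section~\ref{sec:section4}: that on entry to a call the callee sees only its own parameters bound (via $\mathsf{ecall}$'s copy effects), and on exit the caller's view is automatically restored. This amounts to verifying that no action other than the adapted $\mathsf{eact}$, $\mathsf{ecall}$ and $\mathsf{term}$ modifies any $\mathsf{assign}_{v,x}^l$, and that all such actions are locked to the level identified by their $\mathsf{lvl}^l$ precondition. Once this is in place, the goal $G_{n,m}^\ell=G\cup\{\mathsf{cs}_{q_n}^0\}$ forces the simulation to terminate on stack level $0$ in $q_n$ with the $P$-goal $G$ satisfied, which by definition means that $\mathcal{H}$ solves $P$.
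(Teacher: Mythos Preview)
Your proposal is correct and follows essentially the same two-part structure as the paper's own proof sketch: first argue that the one-shot programming actions fix the functions $\Gamma_i,\Lambda_i,\Phi_i$ (hence $\mathcal{H}$), then argue that the remaining execute actions simulate $\mathcal{H}$ on $P$ level by level, with $\mathsf{ecall}$ pushing a frame and $\mathsf{term}$ popping it before the pending $\mathsf{esucc}$ on the caller's level can fire. Your treatment is in fact more detailed than the paper's sketch, particularly in making the per-level invariant explicit and in spelling out why the level-tagged $\mathsf{assign}$ fluents realise the substitution semantics of Section~\ref{sec:section4}; the paper does not address that point at all.
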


\begin{proof}[Proof sketch]
Similar to the argument in the proof of Theorem~\ref{thm:fsc}, the plan $\pi$ has to program the functions $\Gamma_i$, $\Lambda_i$ and $\Phi_i$ of each FSC $C_i\in\mathcal{C}$. Because of the new actions $\mathsf{pcall}_{q,j}^{b,i,l}(p)$, this includes the possibility of making FSC calls. Hence $\pi$ implicitly defines a hierarchical FSC $\mathcal{H}$.

Moreover, $\pi$ simulates an execution of $\mathcal{H}$ on $P$ starting from $(q_0,I)$ on stack level $0$. In any world state $(q,s)$ on stack level $l$ while executing the FSC $C_i$, whenever the plan contains a partial action sequence $\langle\mathsf{econd}_q^{f,i,l},\mathsf{ecall}_{q,j}^{b,i,l}(p),\mathsf{esucc}_{q,q'}^{b,i,l}\rangle$ that involves an FSC call, the effect of $\mathsf{ecall}_{q,j}^{b,i,l}(p)$ is to increment the stack level, causing execution to proceed on stack level $l+1$ for the FSC $C_j$. The only action that decrements the stack level is $\mathsf{term}^{j,{l+1}}$, which is only applicable once we reach the terminal state $q_n$ on stack level $l+1$. Once $\mathsf{term}^{j,{l+1}}$ has been applied, we can now apply action $\mathsf{esucc}_{q,q'}^{b,i,l}$ to transition to the new controller state $q'$.

 If $\pi$ solves $P_{n,m}^\ell$, execution terminates in a state $(q_n,s_n)$ on level $0$ and the goal condition holds in $s_n$, satisfying the condition that $\mathcal{H}$ solves $P$.
\end{proof}

\begin{scriptsize}
\begin{table*}[hbt!]
\centering
\begin{tabular}{l@{\hspace*{5pt}}r@{\hspace*{5pt}}r@{\hspace*{5pt}}r@{\hspace*{5pt}}r@{\hspace*{5pt}}r@{\hspace*{5pt}}r@{\hspace*{5pt}}r}
 \textbf{Domain} & \textbf{Controllers} & \textbf{Solution} & \textbf{States} & \textbf{Instances} & \textbf{Time(s)} & \textbf{Total time (s)} & \textbf{Plan length} \\\hline
%Blocks			&	2			&	HC			&	4,3			&	5,5				&	2,2				&	4&	46,61\\
Blocks			&	1			&	OC			&	3			&	5				&	2				&	2	&	64 \\%\hdashline[1pt/5pt]
%Fibonacci		&	2			&	HC			&	3,3			&	2,5				&	2,177			&	179	&12,129 \\
%Fibonacci		&	1			&	OC			&	5			&	2				&	3570			&	3570&56 \\\hdashline[1pt/5pt]
%Grid			&	3			&	HC			&	5,5,2		&	2,2,4			&	611,631,2		&	1244			&43,43,213\\
%Grid			&	1			&	ME			&	10			&	5				&	-				&	-&	-\\\hdashline[1pt/5pt]
%Gripper			&	3			&	HC			&	3,3,3		&	2,2,2			&	1,1,2			&	4			&12,12,54\\
Gripper			&	1			&	OC			&	3			&	2				&	12				&	12 &	111 \\%\hdashline[1pt/5pt]
%Hall-A			&	5			&	HC			&	5,5,5,3,5	&	2,2,2,2,2		&	3,7,3,1,4		&	18			&44,40,40,73,155\\
%Hall-A			&	1			&	ME			&	14			&	2				&	-				&	-&	- \\\hdashline[1pt/5pt]
List			&	1			&	OC			&	2			&	6				&	0.23			&	0.23 &	158\\%\hdashline[1pt/5pt]
Reverse			&	1			&	OC			&	3			&	2				&	64				&	64	&	61\\%\hdashline[1pt/5pt]
%Sorting			&	2			&	HP			&	4,4			&	4,3				&	14,16			&	30	&73,188 \\
%Sorting			&	1			&	ME			&	7			&	4				&	-				&	-&	-\\\hdashline[1pt/5pt]
Summatory		&	1			&	OC			&	2			&	4				&	8				&	8 &	60 \\%\hdashline[1pt/5pt]
Tree/DFS		&	1			&	RP			&	4			&	1				&	141				&	141	&	102 \\%\hdashline[1pt/5pt]
Visitall		&	3			&	HC			&	2, 1, 2		        &	3, 3, 3			        &	1, 2, 1		        &	4	& 83, 74, 297 \\
%Visitall		&	1			&	ME			&	7			&	2				&	-				&	-&	- \\\hdashline[1pt/5pt]
%Visual-Marker	&	3			&	NP			&	4,2,4		&	4,2,5			&	2,1,10			&	13				&82,18,205 \\
%Visual-Marker	&	1			&	ME			&	8			&	2				&	-				&	-	&-
\end{tabular}
%\caption{Number of controllers used, solution kind (HC=Hierarchical Controllers, OC=One Controller, R=Recursivity, RP=Recursivity with Parameters, ME= Memory exhausted) and per each controller: states, instances to generate each controller, planning time and plan length.}
\caption{Number of controllers used, solution kind (OC=One Controller, HC=Hierarchical Controller, RP=Recursivity with Parameters) and, for each controller: number of states, number of instances in $\mathcal{P}$, planning time and plan length.}
\label{tab:results}
\end{table*}
\end{scriptsize}

We remark that the action $\mathsf{pcall}_{q,j}^{b,i,l}(p)$ can be used to implement recursion by setting $i\equiv j$, making the FSC $C_i$ call itself. We can also partially specify the functions $\Gamma_i$, $\Lambda_i$ and $\Phi_i$ of an FSC $C_i$ by adding fluents of type $\mathsf{cond}_q^{f,i}$, $\mathsf{act}_{q,a}^{b,i}$, $\mathsf{succ}_{q,q'}^{b,i}$ and $\mathsf{call}_{q,j}^{b,i}(p)$ to the initial state $I_{n,m}^\ell$. This way we can incorporate prior knowledge regarding the configuration of some previously existing FSCs in $\mathcal{C}$.

The compilation can be extended to a generalized planning problem $\mathcal{P}=\{P_1,\ldots,P_T\}$ in a way analogous to $P_n$. Specifically, each action $\mathsf{end}_t$, $1\leq t<T$, should have precondition $G_t\cup\{\mathsf{cs}_{q_n}^0\}$ and reset the state to $I_{t+1}\cup\{\mathsf{cs}_{q_0}^0\}$, i.e.~the system should reach the terminal state $q_n$ on stack level $0$ and satisfy the goal condition $G_t$ of $P_t$ before execution proceeds on the next problem $P_{t+1}\in\mathcal{P}$. To solve $P_{n,m}^\ell$, a plan hence has to simulate the execution of $\mathcal{H}$ on all planning problems in $\mathcal{P}$.

\section{Evaluation}
\label{sec:section6}
We evaluate our approach in a set of generalized planning benchmarks and programming tasks taken from \citeauthor{Geffner:FSM:AAAI10}~[\citeyear{Geffner:FSM:AAAI10}] and \citeauthor{javi-Gplanning-ICAPS16}~[\citeyear{javi-Gplanning-ICAPS16}]. In all experiments, we run the classical planner Fast Downward~\cite{Helmert:FD:JAIR06} with the {\sc Lama-2011} setting~\cite{richter:lama:JAIR2010} on a processor Intel Core i5 3.10GHz x 4 with a 4GB memory bound and time limit of 3600s.

We briefly describe each domain used in experiments. In Blocks, the goal is to unstack blocks from a single tower until a green block is found. In Gripper, the goal is to transport a set of balls from one room to another. In List, the goal is to visit all the nodes of a linked list. In Reverse, the goal is to reverse the elements of a list. In Summatory, the goal is to compute the sum $\sum_i^ni$ for a given input $n$. In Tree/DFS, the goal is to visit all nodes of a binary tree. Finally, in Visitall, the goal is to visit all the cells of a square grid.

Table~\ref{tab:results} summarizes the obtained experimental results. In all but two domains our compilation makes it possible to find a single FSC (OC=One Controller) that solves all planning instances in the input. Moreover, we manually verified that the resulting FSC solves all other instances from the same domain. These results reflect those of earlier approaches, but in the domains from \citeauthor{javi-Gplanning-ICAPS16}~[\citeyear{javi-Gplanning-ICAPS16}], the FSC is able to store generalized plans more compactly, and generation of the FSC is faster.

In Tree/DFS, as mentioned in the introduction, generating a single FSC that solves the problem iteratively without recursive calls is difficult. In contrast, since our compilation simulates a call stack, we are able to automatically generate the FSC in Figure~\ref{fig:fsc1}. There are some discrepancies with respect to the compilation that we address below:
\begin{itemize}
\item As described, a solution to the compiled planning problem $P_{n,m}^\ell$ has to program a condition for each controller state, while the FSC in Figure~\ref{fig:fsc1} includes deterministic transitions. However, since all fluents in $f$ are potential conditions, programming a condition on a fluent that is static is effectively equivalent to programming a deterministic transition, since the outcome of the evaluation will always be the same for this fluent.
\item In the solution generated by the planner, the condition $\mathsf{null}(n)$ is actually emulated by a condition $\mathsf{equals}(n,n)$, where $\mathsf{equals}$ is a derived predicate that tests whether two variables have the same value. The reason this works is that when applied to a leaf node $n$, the action $\mathsf{copyR}(n,n)$ deletes the current value of $n$ without adding another value, since $n$ does not have a right child. Hence evaluating $\mathsf{equals}(n,n)$ returns false, since there is no current value of $n$ to unify over.
\item As previously mentioned, the transition to the terminal state $Q_4$ includes an action $\mathsf{visit}(child)$ which is not needed; the reason this action is generated by the planner is that there is no option for leaving the action ``blank''. Effectively, when executing the FSC the action in question has no effect.
\end{itemize}

Finally, in Visitall, attempting to generate a single controller for solving all input instances fails. Moreover, even if we set $m>1$ and attempt to generate a hierarchical controller from scratch, the planner does not find a solution within the given time bound. Instead, our approach is to generate a hierarchical FSC incrementally. We first generate two single FSCs, where the first solves the subproblem of iterating over a single row, visiting all cells along the way, and the second solves the subproblem of returning to the first column. We then use the compilation to generate a planning problem $P_{n,m}^\ell$ in which two of the FSCs are already programmed, so the classical plan only has to automatically generate the root controller.

\section{Related Work}
\label{sec:section7}

%Previous work on automatic generation of FSCs~\cite{Geffner:FSM:AAAI10,Giacomo:FSM:ICAPS13} is similar to ours in that it also takes as input a set of planning problems and exploits compilations that simultaneously program a controller and verify that it solves all planning problems in the input set. 
The main difference with previous work on the automatic generation of FSCs~\cite{Geffner:FSM:AAAI10,Giacomo:FSM:ICAPS13} is that they generate single FSCs relying on a partially observable planning model. In contrast, our compilation generate hierarchical FSCs that can branch on any fluent since we consider all fluents as observable. Our approach also makes it possible to generate recursive slutions and to incorporate prior knowledge as existing FSCs, and automatically complete the definition of the remaining hierarchical FSC.

Hierarchical FSCs are similar to {\em planning programs}~\cite{Jimenez15,javi-Gplanning-ICAPS16}. Programs are a special case of FSCs, and in general, FSCs can represent a plan more compactly. Another related formalism is {\em automaton plans}~\cite{backstrom2014}, which also store sequential plans compactly using hierarchies of finite state automata. However, automaton plans are Mealy machines whose transitions depend on the symbols of an explicit input string. Hence automaton plans cannot store generalized plans, and their focus is instead on the compression of sequential plans.

FSCs can also represent other objects in planning. \citeauthor{Hickmott07}~[\citeyear{Hickmott07}] and \citeauthor{LaValle06}~[\citeyear{LaValle06}] used FSCs to represent the entire planning instance. In contrast, \citeauthor{Toropila10}~[\citeyear{Toropila10}] used FSCs to represent the domains of individual variables of the instance. \citeauthor{Baier:McIlraith:icaps06}~[\citeyear{Baier:McIlraith:icaps06}] showed how to convert an LTL representation of \emph{temporally extended goals}, i.e.~conditions that must hold over the intermediate states of a plan, into a non-deterministic FSC.

\section{Conclusion}
\label{sec:section8}

In this paper we have presented a novel formalism for hierarchical FSCs in planning in which controllers can recursively call themselves or other controllers to represent generalized plans more compactly. We have also introduced a compilation into classical planning which makes it possible to use an off-the-shelf planner to generate hierarchical FSCs. Finally we have showed that hierarchical FSCs can be generated in an incremental fashion to address more challenging generalized planning problems.

%Our approach assumes that all necessary branching conditions are available as fluents of the planning problem. However, some planning domains require high-level state features for defining ``loop-generalizable'' plans, e.g., the {\it well-placed}, {\it above} or {\it good-tower} concepts in Blocksworld~\cite{Geffner:Gpolicies:AppliedI04}. We are currently investigating how to generate such ``helpful'' high-level state features automatically. Another direction for future work is investigating the theoretical properties of hierarchical FSCs compared to single FSCs.

Just as in previous work on the automatic generation of FSCs, our compilation takes as input a bound on the number of controller states. Furthermore, for hierarchical FSCs we specify bounds on the number of FSCs and stack levels. An iterative deepening approach could be implemented to automatically derive these bounds. Another issue is the specification of representative subproblems to generate hierarchical FSCs in an incremental fashion. Inspired by ``Test Driven Development''~\cite{beck2001agile}, we believe that defining subproblems is a step towards automation.

Last but not least, we follow an inductive approach to generalization, and hence we can only guarantee that the solution generalizes over the instances of the generalized planning problem, much as in previous work on computing FSCs. With this said, all the controllers we report in the paper generalize. In machine learning, the validation of a generalized solution is traditionally done by means of statistics and validation sets. In planning this is an open issue, as well as the generation of relevant examples that lead to solutions that generalize.

\subsection*{Acknowledgment}
\begin{small}
This work is partially supported by grant TIN2015-67959 and the Maria de Maeztu Units of Excellence Programme MDM-2015-0502, MEC, Spain.
Sergio Jim\'enez is partially supported by the {\it Juan de la Cierva} program funded by the Spanish government.
\end{small}

%% The file named.bst is a bibliography style file for BibTeX 0.99c
\bibliographystyle{named}
\bibliography{paper-ijcai-fsc-2016.bib}

\end{document}